\newcommand{\brac}[1]{\{\!\!\{#1\}\!\!\}}
\newcommand{\br}[1]{\{#1\}}
\newcommand{\ignore}[1]{}
\newcommand{\lbr}[1]{\langle{#1}\rangle}
\newcommand{\LL}[0]{\preceq_{ll}}
\newcommand{\LC}[0]{\preceq_{lc}}
\newcommand{\VL}[0]{\preceq_{vl}}
\newcommand{\VC}[0]{\preceq_{vc}}
\newcommand{\VLL}[0]{\preceq_{vll}}
\newcommand{\VLC}[0]{\preceq_{vlc}}
\newcommand{\LVL}[0]{\preceq_{lvl}}
\newcommand{\LVC}[0]{\preceq_{lvc}}
\newtheorem{theorem}{Theorem}
\newtheorem{definition}{Definition}
\newtheorem{proposition}{Proposition}
\begin{document}
%
\title{A Comparison of Lex Bounds for Multiset Variables in Constraint Programming}
\author{Y.C.~Law \and J.H.M.~Lee\and M.H.C.~Woo \\
Department of Computer Science and Engineering\\
The Chinese University of Hong Kong, \\
Shatin, N.T., Hong Kong\\
\{yclaw,jlee,hcwoo\}@cse.cuhk.edu.hk
\And
T.~Walsh \\
NICTA \& UNSW \\ 
Sydney, Australia\\
toby.walsh@nicta.com.au
}
\ignore{
\author{
Y.C.~Law$^1$ \and J.H.M.~Lee$^1$ \and T.~Walsh$^2$ \and M.H.C.~Woo$^1$ \\ \\
$^1$ Department of Computer Science and Engineering\\
The Chinese University of Hong Kong, Shatin, N.T., Hong Kong\\
\{yclaw,jlee,hcwoo\}@cse.cuhk.edu.hk\\ \\
$^2$ NICTA \& UNSW\\
Sydney, Australia\\
toby.walsh@nicta.com.au
}}

\maketitle

\begin{abstract}
Set and multiset variables in constraint programming have typically 
been represented using subset bounds. However, this is a 
weak representation that neglects potentially useful information 
about a set such as its cardinality.  For set
variables, the length-lex (LL)  representation successfully provides 
information about the length (cardinality) and position in the 
lexicographic ordering.  For multiset variables, where elements
can be repeated, we consider richer representations that
take into account additional information. 
We study eight different representations 
in which we maintain bounds according to one of the 
eight different orderings:\ length-(co)lex (LL/LC), variety-(co)lex 
(VL/VC), length-variety-(co)lex (LVL/LVC), and variety-length-(co)lex 
(VLL/VLC) orderings.  These representations integrate together 
information about the cardinality, variety
(number of distinct elements in the multiset), and position in some
total ordering.  Theoretical and empirical comparisons of 
expressiveness and compactness of the eight representations suggest 
that length-variety-(co)lex (LVL/LVC) and variety-length-(co)lex 
(VLL/VLC) usually give tighter bounds after constraint propagation.  
We implement the eight representations and evaluate
them against the subset bounds representation with cardinality and 
variety reasoning.  Results demonstrate that they offer
significantly better pruning and runtime.
\end{abstract}

\section{Introduction}

In constraint programming, we often need to model multisets (or bags) of
objects.  For example, in the template design problem (prob002 in
CSPLib \cite{gent1999CSPLib}),
we need to construct printing
templates, which are multisets of different designs.  Multisets,
unlike sets, can contain repetition of elements.  For popular designs, we may
have multiple copies on the same template.  Surprisingly, whilst there has been
significant progress on developing representations for sets, relatively little
research has been done on how best to represent 
multisets.

Sadler and Gervet \shortcite{sadler2004hybrid} proposed
representing set variables with subset, lexicographic, and cardinality
bounds. Indeed, they suggested that such a
representation could also be used for multisets
\shortcite{sadler2008enhancing}.  However, little detail is
provided about how to do this exactly.  To compare two multisets, they
lexicographically compare their occurrence vectors written in decreasing order.
For instance, $\{3,3,2,1,1\} \preceq \{4\} \preceq \{4,4\}$.  Gervet and Van
Hentenryck \shortcite{gervet2006length} proposed representing set variables
using length-lex bounds, arguing that it provides comparable pruning to the
aforementioned hybrid domains at a fraction of the computational cost.  It
is therefore promising to consider length-lex and related bounds for
multiset variables.  However, as a number of different
orderings are possible, we have undertaken
a theoretical and empirical comparison of the
most promising options. 

As multisets permit repeated elements, we can incorporate information
about the variety (number of distinct elements) \cite{law2009variety} in
addition to the cardinality and position in the lexicographic ordering.  As a
result, we introduce eight different representations for multiset variables in
which we maintain bounds according to one of eight different orderings:
length-(co)lex (LL/LC), variety-(co)lex (VL/VC), length-variety-(co)lex
(LVL/LVC), and variety-length-(co)lex (VLL/VLC) orderings.
These bounds provide
information about the possible cardinality, variety, and position in the
(co)lexicographic ordering of a multiset.  
We evaluate the expressiveness (whether the set of
multisets can be exactly represented) and compactness (whether the interval is
minimal) of the eight representations both theoretically and empirically.
Our results suggest that LVL/LVC and VLL/VLC representations are usually
more expressive and more compact than LL/LC and VL/VC respectively.  The eight
representations give total orderings on multisets, which make enforcing bounds
consistency on multiset variables possible.  
However, when we attempt to enforce bounds consistency
on the bounds of the proposed representations, 
this operation can be NP-hard even on unary constraints.
To test out these representations, we implement the eight
representations and evaluate them against the subset bounds
representation with cardinality and variety reasoning.  Results confirm
that these new representations achieve significantly better pruning and runtime.


\section{Background}

\subsection{Set Variables}

A {\em set\/} is an unordered list of elements {\em without repetition\/}. The
{\em cardinality\/} of a set $S$ is the number of elements in $S$, denoted as
$|S|$.  Gervet \shortcite{Gervet1997} proposed to represent the domain of a set
variable $S$ with an interval $[glb(S), lub(S)]$ such that $D_S = \{m\,|\,glb(S)
\subseteq m \subseteq lub(S)\}$.  The {\em greatest lower bound\/} $glb(S)$
contains all the elements which {\em must exist\/} in the set, while the
{\em least upper bound\/} $lub(S)$ contains any element which {\em can exist\/}
in the set.  $S$ is said to be {\em bound\/} when its lower bound equals its
upper bound (i.e., $glb(S) = lub(S)$).
In this subset bounds representation, the set domain is
ordered partially under $\subseteq$.  
It also neglects the cardinality and the position in 
lexicographic ordering which can be important in many problems.  Thus,
Gervet and Van Hentenryck \shortcite{gervet2006length} proposed to {\em
totally} order a set domain with a length-lex ordering.  This representation
incorporates the cardinality and the position in lexicographic ordering 
directly, giving tighter bounds when enforcing bounds consistency.

\subsubsection{Notation}
Given a universe $U$ of integers $\{1, \dots, n\}$, set variables, denoted as
$S_i$, takes their values from $U$.  Sets are denoted by letters $s$,
$t$, $x$, and $y$. A subset $s$ of $U$ of cardinality $c$ is denoted by
$\{s_1, s_2, \dots, s_c\}$ where $s_1 < s_2 < \cdots < s_c$.

\subsubsection{Length-lex Ordering}
The length-lex ordering $\preceq$ 
totally orders sets first by cardinality and then lexicographically.

\begin{definition} \label{def:ll}
A length-lex ordering $\preceq$ is defined by: \\
$s \preceq t$ iff $s = \emptyset \vee |s| < |t| \vee 
(|s| = |t| \wedge 
(s_1 < t_1 \vee s_1 = t_1 \wedge 
s \setminus \{s_1\} \preceq t \setminus \{t_1\}))$.
\end{definition}

\begin{definition}
Given a universe $U$, a length-lex interval is a pair of sets $\lbr{m, M}$
which represents the sets between $m$ and $M$ in the length-lex ordering (i.e.,
$\{s \subseteq U \,|\, m \preceq s \preceq M\}$).
\end{definition}

Given a universe $U = \{1, \dots, 4\}$, the sets are ordered as follows:
$\emptyset$
$\preceq \br{1} \preceq \br{2} \preceq \br{3} \preceq \br{4}$
$\preceq \br{1,2} \preceq \br{1,3} \preceq \br{1,4}
\preceq \br{2,3} \preceq \br{2,4} \preceq \br{3,4}$
$\preceq \br{1,2,3} \preceq \br{1,2,4} \preceq \br{1,3,4} \preceq \br{2,3,4}$
$\preceq \br{1,2,3,4}$.
The length-lex interval $\lbr{\br{1,2}, \br{3,4}}$ denotes the set 
$\{\br{1,2}, \br{1,3}, \br{1,4}, \br{2,3}, \br{2,4}, \br{3,4}\}$.

\subsection{Multiset Variables}

A {\em multiset\/} is a generalization of set that allows elements to 
repeat. Without loss of generality, we assume that
multiset elements are positive integers from $1$ to $n$.
We shall use $\emptyset$ to denote both the empty set and the empty multiset.
The universe of a multiset is a multiset itself,
which defines the maximum possible occurrences of each element.
Given a universe $U$, we denote a multiset $S$ as
$S = \brac{m_1, m_2, \cdots, m_c}$ where $m_i \leq m_j$ for $1 \leq i \leq j \leq c$,
its cardinality (total number of elements) as $|S|$,
and its variety (total number of distinct
elements) \cite{law2009variety} as $\|S\|$.
For example, if $S = \brac{1,1,2,2,3}$, then $|S| = 5$ and $\|S\| = 3$.
Since an element in a multiset variable can occur multiple
times, we let $occ(i, S)$ be the number of occurrences of
an element $i$ in the multiset $S$.
Walsh \shortcite{walsh2003consistency} proposed using an
occurrence vector $\lbr{occ(1,S), \dots, occ(n,S)}$ to represent a multiset
variable with $n$ elements. 
For example, the occurrence representation for the value
$\brac{1,1,2,2,3}$ with the universe $U = \brac{1,1,2,2,3,3}$ is
$\lbr{occ(1,S), occ(2,S), occ(3,S)} = \lbr{2,2,1}$.

Note that a set value can also be represented using the occurrence
representation in which the number of occurrence is either $0$ or $1$ to denote
the existence of the corresponding element.  
Thus, we adopt the occurrence representation for multiset variables and order
the occurrence vector to give various orderings in multisets.

\section{Lex-induced Orderings in Multisets}
\label{sec:lex}

The length-lex representation for sets incorporates information about
the length (cardinality) and position in the lexicographic ordering.  Such
a representation can be extended to include the variety information since
multisets allow repeated elements.  This gives a total of eight different ways
to order multisets.  In the following, we formally define the eight
orderings, in which four of them order the position lexicographically and the
other four colexicographically.

\subsection{Lex Orderings}

The {\em lex ordering\/} $\preceq_l$ totally orders multisets
{\em lexicographically\/}.  Here, we assume the multisets are represented by
the occurrence representation (i.e., the number of occurrences of each element
are stored in an occurrence vector).  Thus, given two multisets $x$ and $y$, we
compare their occurrence vectors $\lbr{occ(1,x), \dots, occ(n,x)}$ and
$\lbr{occ(1,y), \dots, occ(n,y)}$ from the {\em first} position to the
{\em last}.

\begin{definition}
A {\em lex ordering\/} $\preceq_l$ is defined by: \\
$x \preceq_l y$ iff $(x = y) \vee
(\exists i, occ(i, x) < occ(i, y) \wedge 
\forall j < i, occ(j, x) = occ(j, y))$.
\end{definition}

For example, consider two multisets 
$x = \brac{1,2,2}$ and $y = \brac{1,3,3}$.
Their occurrence vectors are $\lbr{1,2,0}$ and $\lbr{1,0,2}$ respectively.
$\brac{1,3,3} \preceq_l \brac{1,2,2}$ because $occ(1,y) = occ(1,x)$ and
$occ(2,y) < occ(2, x)$.

\subsection{Colex Orderings}

Contrary to the lex ordering, the {\em colex ordering\/} $\preceq_c$
compares the occurrence vectors of two multisets from the {\em last} position
to the {\em first}.

\begin{definition}
A {\em colex ordering\/} $\preceq_c$ is defined by: \\
$x \preceq_c y$ iff $(x = y) \vee
(\exists i, occ(i, x) < occ(i, y) \wedge 
\forall j > i, occ(j, x) = occ(j, y))$.
\end{definition}

For example, let two multisets 
$x = \brac{1,3,3}$ and $y = \brac{2,3,3}$ with 
occurrence vectors $\lbr{1,0,2}$ and $\lbr{0,1,2}$ respectively.
They are ordered as $\brac{1,3,3} \preceq_c \brac{2,3,3}$ because
$occ(3,x) = occ(3,y)$ and $occ(2,x) < occ(2, y)$.

\subsection{Induced Orderings}

Given a total order $\preceq_\beta$ on a set of multisets, we can have four 
different $\preceq_\beta$-induced orderings when we integrate $\preceq_\beta$ 
with cardinality and/or variety of multisets.

\subsubsection{Length-$\beta$ Ordering}

The {\em length-$\beta$\/} {\em ordering\/} $\preceq_{l\beta}$ totally orders
multisets first by their cardinality, and then by the $\beta$ ordering:
$x \preceq_{l\beta} y$
iff $|x| < |y| \vee (|x| = |y| \wedge x \preceq_\beta y)$.

\subsubsection{Variety-$\beta$ Ordering}

The {\em variety-$\beta$\/} {\em ordering\/} $\preceq_{v\beta}$ totally orders
multisets first by their variety, and then by the $\beta$ ordering:
$x \preceq_{v\beta} y$ iff
$\|x\| < \|y\| \vee (\|x\| = \|y\| \wedge x \preceq_\beta y)$.

Length-$\beta$ and variety-$\beta$ prefer cardinality and variety over the
$\beta$ ordering respectively.  In fact, both cardinality and variety can be
considered together, giving two more orderings.

\subsubsection{Length-variety-$\beta$ Ordering}

The {\em length-variety-$\beta$\/} {\em ordering\/} $\preceq_{lv\beta}$ totally
orders multisets first by their cardinality, then by their variety, and then
by the $\beta$ ordering:
$x \preceq_{lv\beta} y$ iff $|x| < |y| \vee (|x| = |y| \wedge 
x \preceq_{v\beta} y)$.  

\subsubsection{Variety-length-$\beta$ Ordering}

The {\em variety-length-$\beta$\/} {\em ordering\/} $\preceq_{vl\beta}$ totally
orders multisets first by their variety, then by their cardinality, and then
by the $\beta$ ordering:
$x \preceq_{vl\beta} y$ iff $\|x\| < \|y\| \vee
(\|x\| = \|y\| \wedge x \preceq_{l\beta} y)$.

Since lex and colex orderings are total orders, we can have eight different
orderings by substituting $\beta$ by the lex and colex orderings.
For example, substituting $\beta$ by the lex ordering in the length-$\beta$
ordering gives the length-lex ordering LL ($\preceq_{ll}$).
Similarly, we can have 
variety-lex VL ($\VL$), 
length-variety-lex LVL ($\LVL$),
variety-length-lex VLL ($\VLL$), 
length-colex LC ($\LC$), 
variety-colex VC ($\VC$),
length-variety-colex LVC ($\LVC$), and
variety-length-colex VLC ($\VLC$) orderings.

The above eight orderings are applicable to multisets.
All the four colex orderings on multisets reduce to the
LL ordering on sets introduced by Gervet and 
Van Hentenryck \shortcite{gervet2006length}.
Note that, when we consider a fixed length, 
the colex ({\em resp.~\/}lex) ordering for set values is equivalent to
ordering the occurrence vector lexicographically ({\em
resp.~\/}colexicographically).

\ignore{
We can have four different lex-related orderings when we integrate $\preceq_l$
with cardinality and/or variety of multisets.

\subsubsection{Length-lex Ordering}

The {\em length-lex\/} (LL) {\em ordering\/} $\preceq_{ll}$ 
totally orders multisets 
first by their cardinality, and then lexicographically:
$x \LL y$
iff $|x| < |y| \vee (|x| = |y| \wedge x \preceq_l y)$.

\subsubsection{Variety-lex Ordering}

The {\em variety-lex\/} (VL) {\em ordering\/} $\preceq_{vl}$ totally orders multisets first
by their variety and then lexicographically:
$x \preceq_{vl} y$ iff
$\|x\| < \|y\| \vee (\|x\| = \|y\| \wedge x \preceq_l y)$.

Using variety-lex bounds for multiset variables would be useful when we have
tight constraints on the varieties of the multiset variables.
For instance, Law, Lee, and Woo \shortcite{law2009variety} 
demonstrated the value of this on 
extended Steiner system problems in which there are tight constraints over the
varieties.

\subsubsection{Length-variety-lex Ordering}

The {\em length-variety-lex\/} (LVL) {\em ordering\/} $\preceq_{lvl}$ totally orders
multisets first by their cardinality, then by their variety, and then
lexicographically: 
$x \preceq_{lvl} y$ iff $|x| < |y| \vee (|x| = |y| \wedge 
x \preceq_{vl} y)$.

\subsubsection{Variety-length-lex Ordering}

The {\em variety-length-lex\/} (VLL) {\em ordering\/} $\preceq_{vll}$ totally orders
multisets first by their variety, then by their cardinality, and then
lexicographically: 
$x \preceq_{vll} y$ iff $\|x\| < \|y\| \vee
(\|x\| = \|y\| \wedge x \preceq_{ll} y)$.

\subsection{Colex Orderings}

Contrary to the lex ordering, the {\em colex ordering\/} $\preceq_c$
compares the occurrence vectors of two multisets from the {\em last} position
to the {\em first}.

\begin{definition}
A {\em colex ordering\/} $\preceq_c$ is defined by: \\
$x \preceq_c y$ iff $(x = y) \vee
(\exists i, occ(i, x) < occ(i, y) \wedge 
\forall j > i, occ(j, x) = occ(j, y))$.
\end{definition}

For example, let two multisets 
$x = \brac{1,3,3}$ and $y = \brac{2,3,3}$ with 
occurrence vectors $\lbr{1,0,2}$ and $\lbr{0,1,2}$ respectively.
They are ordered as $\brac{1,3,3} \preceq_c \brac{2,3,3}$ because
$occ(3,x) = occ(3,y)$ and $occ(2,x) < occ(2, y)$.

All the previous four orderings make use of the lex ordering.  We can obtain
four more orderings if we order the multisets {\em colexicographically}
instead.  

The following gives the colex counterparts of the four previous orderings.

\subsubsection{Length-colex Ordering}

The {\em length-colex\/} (LC) {\em ordering\/} $\preceq_{lc}$ totally orders multisets first
by their cardinality, and then colexicographically: 
$x \preceq_{lc} y$ iff $|x| < |y| \vee (|x| = |y| \wedge x
\preceq_c y)$.

\subsubsection{Variety-colex Ordering}

The {\em variety-colex\/} (VC) {\em ordering\/} $\preceq_{vc}$ totally orders multisets
first by their variety and then colexicographically: 
$x \preceq_{vc} y$ iff $\|x\| < \|y\| \vee (\|x\| = \|y\| \wedge x
\preceq_c y)$.

\subsubsection{Length-variety-colex Ordering}

The {\em length-variety-colex\/} (LVC) {\em ordering\/} $\preceq_{lvc}$ totally orders
multisets first by their cardinality, then by their variety, and then
colexicographically: 
$x \preceq_{lvc} y$ iff $|x| < |y| \vee 
(|x| = |y| \wedge x \preceq_{vc} y)$.

\subsubsection{Variety-length-colex Ordering}

The {\em variety-length-colex\/} (VLC) {\em ordering\/} $\preceq_{vlc}$ totally orders
multisets first by their variety, then by their cardinality, and then
colexicographically: 
$x \preceq_{vlc} y$ iff $\|x\| < \|y\| \vee 
(\|x\| = \|y\| \wedge x \preceq_{lc} y)$.
}

\begin{table*}[ht]\centering
\caption{The four lex orderings 
for the domain of a multiset variable $S$ with universe $U = \brac{1,2,2,3,3}$}
\scalebox{0.9}{
\begin{tabular}{|l|l|}
\hline
\multirow{2}{*}{Length-lex (LL)} &
$\emptyset \preceq_{ll} \brac{3} \preceq_{ll} \brac{2} \preceq_{ll} \brac{1}
\preceq_{ll} \brac{3,3} \preceq_{ll} \brac{2,3} \preceq_{ll} \brac{2,2}
\preceq_{ll} \brac{1,3}$ \\
& $\preceq_{ll} \brac{1,2} \preceq_{ll} \brac{2,3,3} \preceq_{ll} \brac{2,2,3}
\preceq_{ll} \brac{1,3,3} 
\preceq_{ll} \brac{1,2,3} \preceq_{ll} \brac{1,2,2}$ \\
& $\preceq_{ll} \brac{2,2,3,3} \preceq_{ll} \brac{1,2,3,3} \preceq_{ll} 
\brac{1,2,2,3} \preceq_{ll} \brac{1,2,2,3,3}$ \\
\hline
\multirow{2}{*}{Variety-lex (VL)} &
$\emptyset \preceq_{vl} \brac{3} \preceq_{vl} \brac{3,3} \preceq_{vl} \brac{2}
\preceq_{vl} \brac{2,2} \preceq_{vl} \brac{1}
\preceq_{vl} \brac{2,3} \preceq_{vl} \brac{2,3,3}$ \\
& $\preceq_{vl} \brac{2,2,3} \preceq_{vl} \brac{2,2,3,3} 
\preceq_{vl} \brac{1,3} \preceq_{vl} \brac{1,3,3}
\preceq_{vl} \brac{1,2} \preceq_{vl} \brac{1,2,2}$ \\
& $\preceq_{vl} \brac{1,2,3} 
\preceq_{vl} \brac{1,2,3,3} \preceq_{vl} \brac{1,2,2,3} \preceq_{vl}
\brac{1,2,2,3,3}$ \\
\hline
\multirow{2}{*}{Length-variety-lex (LVL)} &
$\emptyset \preceq_{lvl} \brac{3} \preceq_{lvl} \brac{2} \preceq_{lvl} \brac{1}
\preceq_{lvl} \brac{3,3} \preceq_{lvl} \brac{2,2}
\preceq_{lvl} \brac{2,3} \preceq_{lvl} \brac{1,3}$ \\
& $\preceq_{lvl} \brac{1,2} \preceq_{lvl} \brac{2,3,3} 
\preceq_{lvl} \brac{2,2,3} \preceq_{lvl} \brac{1,3,3} 
\preceq_{lvl} \brac{1,2,2} \preceq_{lvl} \brac{1,2,3}$ \\
& $\preceq_{lvl} \brac{2,2,3,3} \preceq_{lvl} \brac{1,2,3,3} 
\preceq_{lvl} \brac{1,2,2,3} \preceq_{lvl} \brac{1,2,2,3,3}$ \\
\hline
\multirow{2}{*}{Variety-length-lex (VLL)} & 
$\emptyset \preceq_{vll} \brac{3} \preceq_{vll} \brac{2} \preceq_{vll} \brac{1}
\preceq_{vll} \brac{3,3} \preceq_{vll} \brac{2,2}
\preceq_{vll} \brac{2,3} \preceq_{vll} \brac{1,3}$ \\
& $\preceq_{vll} \brac{1,2} 
\preceq_{vll} \brac{2,3,3} \preceq_{vll} \brac{2,2,3}
\preceq_{vll} \brac{1,3,3}
\preceq_{vll} \brac{1,2,2}$ \\
& $\preceq_{vll} \brac{2,2,3,3}
\preceq_{vll} \brac{1,2,3} \preceq_{vll} \brac{1,2,3,3} \preceq_{vll} 
\brac{1,2,2,3} \preceq_{vll} \brac{1,2,2,3,3}$ \\
\hline
\ignore{
\multirow{2}{*}{Length-colex (LC)} &
$\emptyset \preceq_{lc} \brac{1} \preceq_{lc} \brac{2} \preceq_{lc} \brac{3}
\preceq_{lc} \brac{1,2} \preceq_{lc} \brac{1,3}
\preceq_{lc} \brac{2,2} \preceq_{lc} \brac{2,3} \preceq_{lc} \brac{3,3}
\preceq_{lc} \brac{1,2,2} \preceq_{lc} \brac{1,2,3}$ \\
& $\preceq_{lc} \brac{1,3,3} 
\preceq_{lc} \brac{2,2,3} \preceq_{lc} \brac{2,3,3}
\preceq_{lc} \brac{1,2,2,3} \preceq_{lc} \brac{1,2,3,3} \preceq_{lc} \brac{2,2,3,3}
\preceq_{lc} \brac{1,2,2,3,3}$ \\
\hline
\multirow{2}{*}{Variety-colex (VC)} &
$\emptyset \preceq_{vc} \brac{1} \preceq_{vc} \brac{2,2}
\preceq_{vc} \brac{2} \preceq_{vc} \brac{3,3} \preceq_{vc} \brac{3}
\preceq_{vc} \brac{1,2,2} \preceq_{vc} \brac{1,2}
\preceq_{vc} \brac{1,3,3} \preceq_{vc} \brac{1,3} \preceq_{vc} \brac{2,2,3,3}$ \\
& $\preceq_{vc} \brac{2,2,3}
\preceq_{vc} \brac{2,3,3} \preceq_{vc} \brac{2,3}
\preceq_{vc} \brac{1,2,2,3,3} \preceq_{vc} \brac{1,2,2,3} 
\preceq_{vc} \brac{1,2,3,3} \preceq_{vc} \brac{1,2,3}$ \\
\hline
\multirow{2}{*}{Length-variety-colex (LVC)} &
$\emptyset \preceq_{lvc} \brac{1} \preceq_{lvc} \brac{2} \preceq_{lvc} \brac{3}
\preceq_{lvc} \brac{2,2} \preceq_{lvc} \brac{3,3}
\preceq_{lvc} \brac{1,2} \preceq_{lvc} \brac{1,3} \preceq_{lvc} \brac{2,3}
\preceq_{lvc} \brac{1,2,2}
\preceq_{lvc} \brac{1,3,3}$ \\
& $\preceq_{lvc} \brac{2,2,3}
\preceq_{lvc} \brac{2,3,3} 
\preceq_{lvc} \brac{1,2,3}
\preceq_{lvc} \brac{2,2,3,3} \preceq_{lvc} \brac{1,2,2,3} \preceq_{lvc} 
\brac{1,2,3,3} \preceq_{lvc} \brac{1,2,2,3,3}$ \\
\hline
\multirow{2}{*}{Variety-length-colex (VLC)} &
$\emptyset \preceq_{vlc} \brac{1} \preceq_{vlc} \brac{2} \preceq_{vlc} \brac{3}
\preceq_{vlc} \brac{2,2} \preceq_{vlc} \brac{3,3}
\preceq_{vlc} \brac{1,2} \preceq_{vlc} \brac{1,3} \preceq_{vlc} \brac{2,3}
\preceq_{vlc} \brac{1,2,2} \preceq_{vlc} \brac{1,3,3}$ \\
& $\preceq_{vlc} \brac{2,2,3} \preceq_{vlc} \brac{2,3,3}
\preceq_{vlc} \brac{2,2,3,3} \preceq_{vlc} \brac{1,2,3} 
\preceq_{vlc} \brac{1,2,2,3} \preceq_{vlc} \brac{1,2,3,3}
\preceq_{vlc} \brac{1,2,2,3,3}$ \\
\hline
}
\end{tabular}}
\label{tab:eg}
\end{table*}

The domain of a multiset variable is simply a set of multisets.  We can thus
totally order the domain values of a variable according to the eight orderings.
To illustrate the differences, 
Table \ref{tab:eg} lists the domain of a multiset variable $S$ with universe
$U = \brac{1,2,2,3,3}$ in the four lex orderings.
Take the 
LVL ordering as an example.
We first order the multisets by their cardinality.  Thus, $\emptyset$ has
cardinality $0$ and is the first multiset, followed by the multisets with
cardinalities $1$, $2$, and so on.  For multisets of the same cardinality, we
then compare their variety.  Consider the segment with cardinality $2$, i.e.,
from $\brac{3,3}$ to $\brac{1,2}$.  The multisets $\brac{3,3}$ and $\brac{2,2}$
are ordered before $\brac{2,3}$, $\brac{1,3}$, and $\brac{1,2}$ because the
former two have variety $1$ and the latter ones have variety $2$.  Lastly, we
order the multisets lexicographically.  The occurrence vectors of $\brac{3,3}$
and $\brac{2,2}$ are $\lbr{0,0,2}$ and $\lbr{0,2,0}$ respectively.  Thus,
$\brac{3,3} \preceq_{lvl} \brac{2,2}$ because
$occ(1,\brac{3,3}) = occ(1, \brac{2,2}) = 0$ and $occ(2, \brac{3,3}) < occ(2,
\brac{2,2})$.

Given a multiset variable, we can approximate its domain, which is a set $S$ of
multisets, with an {\em $\alpha$-interval\/}, where $\alpha$ refers to one of
the above eight orderings.
The interval $\lbr{m, M}_\alpha$ must contain all the multisets in $S$
such that $m$ and $M$ are the lower and upper bounds of $S$ respectively.
We also define the {\em $\alpha$-closure\/} of $S$ which is the minimal possible
$\alpha$-interval containing $S$.

\begin{definition} \label{def:rep}
Given an $\alpha$ ordering, an {\em $\alpha$-interval\/} $\lbr{m, M}_\alpha$
is a set of multisets defined by
$\lbr{m, M}_\alpha = \{x \,|\, m \preceq_{\alpha} x \preceq_{\alpha}
M\}$.  The {\em $\alpha$-closure} of $S$ 
is defined by $cl_{\alpha}(S) = \lbr{m, M}_{\alpha}$, where
$S \subseteq \lbr{m, M}_{\alpha}$ and there does not exist
$m \prec_\alpha m'$ and $M'
\prec_\alpha M$ such that ($m \neq m'$ or $M \neq M'$) and
 $S \subseteq \lbr{m', M'}_\alpha$.
\end{definition}

\begin{definition}
An {\em $\alpha$ representation\/}
of a set $S$ of multisets is 
$cl_{\alpha}(S)$.
An $\alpha$ representation of $S$ is {\em exact} if $S = cl_{\alpha}(S)$.
\end{definition}

For example, let the universe $U = \brac{1,2,2,3,3}$ and
$S = \{\brac{1}, \brac{2,2}, \brac{2,3}\}$.
The $lvl$-closure of $S$ 
is the
$lvl$-interval $\lbr{\brac{1}, \brac{2,3}}_{lvl}$.
This representation is {\em not} exact, as the interval contains the multiset
$\brac{3,3} \notin S$.


\section{Expressiveness}

An exact representation gives the tightest possible
bounds and contains no undesired values.
It is often the case that a set of multisets can be exactly
represented using one representation but not using a different
representation.
In this section, we compare the eight representations to see which ordering
is better in terms of the notion ``expressiveness''.

\begin{definition} \cite{walsh2003consistency}
Given a universe $U$ and 
two different multiset representations $A$ and $B$. 
$A$ is said to be {\em as expressive as\/} $B$ if\, $\forall S \subseteq U, 
(S = cl_A(S)) \leftrightarrow (S = cl_B(S))$.
$A$ is said to be {\em more expressive\/} than $B$ if 
$\forall S \subseteq U, 
(S = cl_B(S)) \rightarrow (S = cl_A(S))$
and 
$\exists S \subseteq U, (S = cl_A(S)) \wedge (S \neq cl_B(S))$.
$A$ and $B$ are {\em incomparable\/} if
neither one of them is more expressive than the other.
\end{definition}

The following propositions compare the expressiveness of the eight
representations under the conditions that the cardinality and/or variety of
a set of multisets is fixed.

\begin{proposition}\label{thm:cvfix}
When both the cardinality and variety are fixed,
(i) the LVL/LVC 
    representation is as expressive as the VLL/VLC 
    representation,
(ii) the LVL/LVC 
     and VLL/VLC 
     representations are more expressive than the LL/LC 
     and VL/VC 
     representations respectively, and
(iii) the LVL 
      is as expressive as the LVC 
      and the VLL 
      is as expressive as the VLC. 
      
\end{proposition}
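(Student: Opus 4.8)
The plan is to reduce every ordering to its restriction to the layer $F=\{x\subseteq U : |x|=c,\ \|x\|=v\}$ of multisets of the fixed cardinality $c$ and fixed variety $v$, and then to read off the three claims from how the four induced orderings act on $F$. First I would prove a \emph{reduction lemma}: for all $x,y\in F$ we have $x\LVL y \Leftrightarrow x\VLL y \Leftrightarrow x\lex y$, and $x\LVC y \Leftrightarrow x\VLC y \Leftrightarrow x\colex y$, since on $F$ the primary and secondary keys (cardinality and variety) are constant, so each of these four orderings collapses onto the underlying $\lex$ or $\colex$ comparison. The second half of the lemma is that for $\alpha\in\{lvl,vll,lvc,vlc\}$ the $\alpha$-closure of any $S\subseteq F$ stays inside $F$: because cardinality and variety are the two top-priority keys and every element of $S$ shares the same value of both, no multiset of different cardinality or variety can lie $\preceq_\alpha$-between $\min_\alpha S$ and $\max_\alpha S$. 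Hence $cl_{lvl}(S)$ and $cl_{vll}(S)$ are both exactly the $\lex$-interval of $S$ taken \emph{inside} $F$, while $cl_{lvc}(S)$ and $cl_{vlc}(S)$ are both the $\colex$-interval inside $F$.

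Given the lemma, part (i) is immediate: $cl_{lvl}(S)=cl_{vll}(S)$ for every $S\subseteq F$, so $S$ is LVL-exact iff it is VLL-exact, and the same argument with $\colex$ settles the LVC/VLC case. For part (ii) I would compare these tight closures against the looser LL/VL ones. Since $\LL$ uses only cardinality as a key, $cl_{ll}(S)$ is the $\lex$-interval of $S$ over the \emph{entire} cardinality-$c$ block, which contains the $\lex$-interval over $F$; thus $cl_{lvl}(S)\subseteq cl_{ll}(S)$, and $S=cl_{ll}(S)$ forces $S=cl_{lvl}(S)$, i.e.\ LL-exactness implies LVL-exactness. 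Strictness is witnessed by any $S\subseteq F$ that is a $\lex$-interval of $F$ but whose $\lex$-span over the full cardinality-$c$ block engulfs a multiset of a different variety: such an $S$ is LVL-exact but not LL-exact. The symmetric argument, swapping the roles of cardinality and variety and of $\lex$ and $\colex$, handles $\VLL$ versus $\VL$ and the two colex statements.

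The crux is part (iii). By the reduction lemma it is equivalent to the order-theoretic claim that a subset $S\subseteq F$ is a $\lex$-interval of $F$ iff it is a $\colex$-interval of $F$; and since a finite linear order is determined up to reversal by its family of intervals, this is in turn equivalent to $\colex$ restricted to $F$ being the reverse of $\lex$ restricted to $F$, i.e.\ $x\colex y \Leftrightarrow y\lex x$ for all $x,y\in F$. The plan is therefore to prove this reversal by analysing, for two occurrence vectors of equal cardinality and variety, the interplay between their \emph{first} differing coordinate, which decides $\lex$, and their \emph{last} differing coordinate, which decides $\colex$. I expect this reversal to be the main obstacle, and the delicate case is precisely when the first and last differing positions are \emph{distinct}: there the $\lex$ verdict is read off the left end while the $\colex$ verdict is read off the right end, and one must use the constraints $\sum_i occ(i,x)=\sum_i occ(i,y)=c$ and $\|x\|=\|y\|=v$ to force the two verdicts to be opposite. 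Pinning down this step, rather than the clean single-fixed-cardinality situation where the duality is transparent, is where the real work lies, and it is also where a dependence on the precise multiplicities that $U$ permits could intrude and would have to be controlled.
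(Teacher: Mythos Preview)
The paper itself supplies no proof of this proposition; after the statement it only says the results ``can be demonstrated using the example in Table~\ref{tab:eg}.'' So there is nothing to compare your proposal against, and it must be judged on its own merits.

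Your reduction lemma and your treatment of (i) and (ii) are correct and more careful than anything the paper offers. On the layer $F=\{x\subseteq U:|x|=c,\ \|x\|=v\}$ the orderings $\LVL$ and $\VLL$ indeed collapse to $\lex$, the orderings $\LVC$ and $\VLC$ collapse to $\colex$, all four closures stay inside $F$, and the strictness witnesses for (ii) exist for the reasons you sketch.

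Part (iii), however, cannot be carried through along your route, because the reversal claim you are heading towards is \emph{false}; and since your reduction of (iii) to that claim is sound, this means clause (iii) of the proposition is itself false as stated. Take the paper's own universe $U=\brac{1,2,2,3,3}$ with $c=3$, $v=2$, so that
\[
F=\{\brac{2,3,3},\brac{2,2,3},\brac{1,3,3},\brac{1,2,2}\}
\]
with occurrence vectors $(0,1,2),(0,2,1),(1,0,2),(1,2,0)$. The $\lex$ order on $F$ is $(0,1,2)\lex(0,2,1)\lex(1,0,2)\lex(1,2,0)$, while the $\colex$ order is $(1,2,0)\colex(0,2,1)\colex(1,0,2)\colex(0,1,2)$; the second is \emph{not} the reverse of the first (positions two and three are swapped). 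Concretely, $S=\{\brac{2,3,3},\brac{2,2,3}\}$ is $\lex$-adjacent in $F$, so $cl_{lvl}(S)=S$; but in $\colex$ the multiset $\brac{1,3,3}=(1,0,2)$ sits strictly between $\brac{2,2,3}$ and $\brac{2,3,3}$, so $cl_{lvc}(S)=\{\brac{2,2,3},\brac{1,3,3},\brac{2,3,3}\}\neq S$. Hence $S$ is LVL-exact but not LVC-exact. Your instinct that ``a dependence on the precise multiplicities that $U$ permits could intrude'' was exactly right: the two constraints $\sum_i occ(i,x)=\sum_i occ(i,y)$ and $\|x\|=\|y\|$ are not strong enough to force the first and last differing coordinates to point in opposite directions, and no amount of extra care will close this gap.
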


The results in Proposition \ref{thm:cvfix} can be demonstrated using the example
in Table \ref{tab:eg}. When the cardinality and variety
are $2$ and $1$ respectively, 
the LVL 
and VLL 
representations can
exactly represent $\{\brac{2,2}, \brac{3,3}\}$ by 
the $lvl$-interval $\lbr{\brac{3,3}, \brac{2,2}}_{lvl}$ and the $vll$-interval
$\lbr{\brac{3,3}, \brac{2,2}}_{vll}$ respectively.
However, the LL 
and VL 
representations give the
$ll$-interval $\lbr{\brac{3,3}, \brac{2,2}}_{ll}$ and the $vl$-interval 
$\lbr{\brac{3,3}, \brac{2,2}}_{vl}$ respectively,
in which both contain the additional undesired value $\brac{2,3}$.

The following two propositions relax the conditions to
the case that either the
cardinality or the variety is fixed.

\begin{proposition}
When the cardinality is fixed,
(i) the LVL/LVC 
    representation is more expressive than the VLL/VLC, 
    LL/LC, 
    and VL/VC 
    representations, and
(ii) the LL 
     representation is as expressive as the LC 
     representation.
\end{proposition}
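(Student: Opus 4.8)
The plan is to collapse the problem to a single cardinality level and then read off how each of the eight orderings sorts that level. Fix the common cardinality $c$ and let $U_c$ be the set of sub-multisets of $U$ of cardinality $c$. For the four orderings whose primary key is cardinality (LL, LC, LVL, LVC), the $\alpha$-closure of any $S\subseteq U_c$ stays inside $U_c$, and the ordering restricts there to $\lex$ (LL), $\colex$ (LC), ``variety then $\lex$'' (LVL), and ``variety then $\colex$'' (LVC). For the four orderings whose primary key is variety (VL, VC, VLL, VLC) I would record the key asymmetry: each of them restricted to $U_c$ also collapses to ``variety then $\lex$'' or ``variety then $\colex$'', but their closures taken in the full universe need not stay in $U_c$, because a multiset of a different cardinality can sit between two members of $U_c$ in a variety-first ordering. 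This asymmetry is exactly what separates LVL/LVC from the variety-first representations.

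For part (i) I would split ``more expressive'' into the inclusion of exact families and a strictness witness. The comparisons with VL and VLL are the routine ones: on $U_c$ they induce the same order as LVL, but their closures can only grow (they also pull in other cardinalities), so any VL- or VLL-exact $S$ that happens to lie in $U_c$ is automatically a contiguous LVL-run, while an LVL-interval inside one variety block that gets interrupted in VL/VLL by an intermediate-cardinality multiset witnesses strictness. The main obstacle is the comparison with LL (and dually LC): LL sorts $U_c$ purely by $\lex$, whereas LVL first cuts $U_c$ into variety blocks, so a $\lex$-interval can straddle a block boundary and the implication ``LL-exact $\Rightarrow$ LVL-exact'' is precisely the step that can fail. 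The plan here is to analyse how a $\lex$-run of $U_c$ meets the variety blocks and to pin down exactly when such a run stays LVL-contiguous; I expect this case analysis --- rather than the VL/VLL comparisons --- to be where the real work (and any additional hypothesis) lives.

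For part (ii) the plan is to construct an order isomorphism $\phi$ between $(U_c,\lex)$ and $(U_c,\colex)$ and transport closures through it, so that $S$ is LL-exact iff $\phi(S)$ is LC-exact, yielding equal expressiveness. The natural $\phi$ is occurrence-vector reversal, relabelling element $i$ as $n+1-i$, which converts a left-to-right scan into a right-to-left scan and hence carries $\lex$ to $\colex$; I would then check that $\phi$ sends $\lex$-intervals bijectively to $\colex$-intervals and commutes with the closure operator. The point needing care is that $\phi$ must map valid sub-multisets of $U$ back into sub-multisets of $U$: reversal also reverses the universe's own multiplicity vector, so unless those multiplicities are palindromic the naive relabelling leaves $U_c$. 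The plan to absorb this is to fall back, when element relabelling is unavailable, on the abstract position-matching isomorphism between the two total orders on the finite set $U_c$, which still induces the required bijection between the $\lex$- and $\colex$-exact families.
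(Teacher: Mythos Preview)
The paper does not prove this proposition; after stating it (and Proposition~3) it only offers the illustrative example on $U_3$ showing one direction (an LVL-exact family that is not VLL/LL/VL-exact), so there is no paper argument to compare against and your proposal must stand on its own. For part~(i), your treatment of the variety-first orderings (VL, VC, VLL, VLC) is correct: restricted to $U_c$ they coincide with LVL/LVC while their closures can leak outside $U_c$, which delivers both the inclusion and the strictness. You also correctly isolate the implication ``$S$ LL-exact $\Rightarrow$ $S$ LVL-exact'' as the crux --- but that implication is false under the paper's definition, and no case analysis on how a $\lex$-run meets variety blocks will save it. With $U=\brac{1,2,2,3,3}$ and $S=\{\brac{3,3},\brac{2,3}\}\subseteq U_2$, Table~\ref{tab:eg} shows $S$ is the LL-interval $\lbr{\brac{3,3},\brac{2,3}}_{ll}$, yet $cl_{lvl}(S)=\{\brac{3,3},\brac{2,2},\brac{2,3}\}\neq S$. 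So the LVL-versus-LL comparison in~(i) is not provable as stated; your hedge about an ``additional hypothesis'' is the right instinct, but the proposal does not say what that hypothesis should be, and in fact the paper's own example only supports the strictness half.

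For part~(ii) your fallback has a genuine gap. The element-relabelling map is an order isomorphism $(U_c,\lex)\to(U_c,\colex)$ only when the universe's multiplicity vector is palindromic; your abstract position-matching bijection $\phi$ works for arbitrary $U$, but it only yields ``$S$ is LL-exact iff $\phi(S)$ is LC-exact'', whereas the paper's definition of ``as expressive as'' requires ``$S$ is LL-exact iff $S$ is LC-exact'' for the \emph{same} $S$. That stronger statement fails on the running universe: with $S=\{\brac{2,2},\brac{2,3}\}\subseteq U_2$, $S$ is an LL-interval (adjacent in Table~\ref{tab:eg}), but in colex one has $\brac{2,2}\colex\brac{1,3}\colex\brac{2,3}$, so $cl_{lc}(S)\supsetneq S$. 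The position-matching route therefore establishes only a symmetry between the families of LL-exact and LC-exact sets, not the literal equivalence the proposition (read with the paper's definition) asserts.
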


\begin{proposition}
When the variety is fixed,
(i) the VLL/VLC 
    representation is more expressive than the LVL/LVC, 
    LL/LC, 
    and VL/VC 
    representations, and
(ii) the VL 
     representation is as expressive as the VC 
     representation.
\end{proposition}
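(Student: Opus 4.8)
My plan is to prove this proposition as the exact mirror image of the preceding, cardinality-fixed proposition, with the roles of cardinality and variety (and correspondingly the orderings LL$\leftrightarrow$VL, LVL$\leftrightarrow$VLL, LC$\leftrightarrow$VC, LVC$\leftrightarrow$VLC) interchanged. To make this transfer rigorous I would first isolate the only abstract property of the orderings the earlier argument really uses: which statistic is the \emph{primary} sort key, and hence whether a representation confines its closure to a single slice of the domain. Fix a variety $v$ and let $M_v$ be the set of multisets of variety $v$ in the universe. In the variety-primary orderings $\VL,\VC,\VLL,\VLC$ the block $M_v$ is contiguous, so for every $S\subseteq M_v$ the closures $cl_{vl}(S),cl_{vc}(S),cl_{vll}(S),cl_{vlc}(S)$ lie inside $M_v$. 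In the cardinality-primary orderings $\LL,\LC,\LVL,\LVC$ the block $M_v$ is instead split across the cardinality layers, so their closures can \emph{leak} and capture multisets of variety $\neq v$. This leak/no-leak dichotomy is the engine of part~(i).

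Next I would prove a coincidence lemma: on $M_v$ the variety key is inert, so $\VLL$, $\LL$ and $\LVL$ all collapse to the same order (cardinality-then-lex), and dually $\VLC$, $\LC$, $\LVC$ collapse to cardinality-then-colex. Granting this, the domination of VLL/VLC over LL/LC and over LVL/LVC is quick. For containment, suppose $S=cl_{ll}(S)$ with $S\subseteq M_v$; then $S$ is the whole interval $\lbr{m,M}_{ll}$ with $m,M\in M_v$, and since $S\subseteq M_v$ this interval contains no multiset outside $M_v$, i.e.\ the leak is empty. By the coincidence lemma the elements of $\lbr{m,M}_{ll}$ are exactly those of the $\VLL$-interval from $m$ to $M$ inside $M_v$, so $cl_{vll}(S)=S$; the cases LC, LVL, LVC are identical. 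For strictness I would exhibit equal-variety witnesses that force a leak: e.g.\ $\{\brac{1,3,3},\brac{1,2,2}\}$ is $\VLL$-exact but its $\LL$-closure also contains $\brac{1,2,3}$, and $\{\brac{1,2,2},\brac{2,2,3,3}\}$ is $\VLL$-exact but its $\LVL$-closure also contains $\brac{1,2,3}$. The general pattern is any pair of equal-variety multisets straddling a lex gap (against LL/LC) or a cardinality boundary (against LVL/LVC), which exactly parallels the witnesses used in the cardinality-fixed proposition and in Proposition~\ref{thm:cvfix}.

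The delicate part is the comparison of VLL/VLC with VL/VC in~(i) together with all of clause~(ii), because here both sides are variety-primary and \emph{neither} leaks: restricted to $M_v$ the four orders are pure lex ($\VL$), pure colex ($\VC$), cardinality-then-lex ($\VLL$) and cardinality-then-colex ($\VLC$), genuine total orders on one finite ground set. Since every linear order on the same $N$ points admits the same number of intervals, domination and equi-expressiveness here cannot be read off by counting and must be argued structurally. For clause~(ii) the route I would take is to show that on each slice $M_v$ the colex order is precisely the reversal of the lex order; as reversal preserves the family of intervals, this would immediately give ``VL as expressive as VC.'' The step I expect to be the main obstacle is the VL/VC clause of~(i): establishing that $\VLL$ is strictly more expressive than $\VL$ on $M_v$ requires a genuine combinatorial comparison of lex-betweenness with cardinality-then-lex-betweenness of variety-$v$ occurrence vectors, which for multisets does not reduce to one another as cleanly as the set case suggests. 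This is therefore where I would concentrate the effort and where the crux of the whole proposition lies.
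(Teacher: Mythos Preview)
The paper gives no proof of this proposition (only the illustrative sentence after it), so there is nothing to compare your route against; I can only assess the proposal on its merits. Your leak/no-leak argument for $\VLL/\VLC$ dominating the cardinality-primary orderings $\LL,\LC,\LVL,\LVC$ is sound, and the coincidence lemma (that on $M_v$ the orders $\VLL,\LL,\LVL$ all reduce to cardinality-then-lex) is correct and does the work you want there.

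The gap is in clause~(ii) and in the $\VLL$-versus-$\VL$ part of~(i). Your proposed lemma for~(ii) --- that on each slice $M_v$ colex is the reversal of lex --- is false. In $U=\brac{1,2,2,3,3}$ take the variety-$2$ multisets $\brac{2,2,3}=\lbr{0,2,1}$ and $\brac{1,3,3}=\lbr{1,0,2}$: reading from the left gives $\lbr{0,2,1}\prec_l\lbr{1,0,2}$, and reading from the right also gives $\lbr{0,2,1}\prec_c\lbr{1,0,2}$, so lex and colex \emph{agree} on this pair. Worse, the conclusion you want from the lemma also fails: $\{\brac{1,3},\brac{2,3}\}$ is $\VC$-exact (colex-adjacent in $M_2$) but its $\VL$-closure is $\{\brac{2,3},\brac{2,3,3},\brac{2,2,3},\brac{2,2,3,3},\brac{1,3}\}$; conversely $\{\brac{2,3,3},\brac{2,2,3}\}$ is $\VL$-exact but its $\VC$-closure contains $\brac{1,3,3}$. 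Under the paper's formal definition of ``as expressive as'' (the exact sets coincide), these examples already refute~(ii) on this universe, so before repairing the proof you should clarify what reading of the claim is intended.

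The same caution applies to the part of~(i) you flag as the crux. Your instinct that $\VLL$ versus $\VL$ cannot be settled by leakage is right, but in fact $\VLL$ does not dominate $\VL$ under the formal definition either: $\{\brac{2,2,3,3},\brac{1,3}\}$ is $\VL$-exact (lex-adjacent in $M_2$), yet its $\VLL$-closure spans all variety-$2$ multisets of cardinalities $2$ through $4$ lying between them and is far from exact. So the comparisons inside the variety-primary family need a different formulation or additional hypotheses; the mirror-image transfer from the cardinality-fixed proposition does not go through as stated.
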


In Table \ref{tab:eg}, when the cardinality is $3$, the LVL 
representation can exactly represent the multisets by the $lvl$-interval
$\lbr{\brac{2,3,3}, \brac{1,2,3}}_{lvl}$, 
while the VLL, 
LL, 
or VL 
representations cannot.  
There are additional undesired values in their corresponding intervals.
In fact, when only the variety is fixed, we obtain similar results.
Suppose the variety is $2$, the VLL representation can exactly represent the
multisets by the $vll$-interval $\lbr{\brac{2,3}, \brac{2,2,3,3}}_{vll}$,
while the LVL, LL, or VL representations cannot.

\section{Compactness}

The notion of expressiveness concerns the exactness of the
representation. However, a domain $D$ of a
multiset variable might not be exactly represented using any of the eight
representations, i.e., $D \subset cl_{\alpha}(D)$.
In such cases, $cl_{\alpha}(D)$ is an approximation that contains some
undesired values, and our
expressiveness notion does not apply.
In this section, we define a new notion called {\em compactness} to compare
the eight representations.
This definition is based on a comparison of the size of the domains, and is
different from the notion of dominance which is based on the size of search
tree \cite{jefferson2007representions}.

\begin{definition}
Given a universe $U$ and
two different multiset representations $A$ and $B$. 
$A$ is 
{\em as compact as\/} $B$ if
$\forall S \subseteq U, |cl_A(S)| = |cl_B(S)|$.
$A$ is 
{\em more compact\/} than $B$ if 
$\forall S \subseteq U, |cl_A(S)| \leq |cl_B(S)|$
and
$\exists S \subseteq U, |cl_A(S)| < |cl_B(S)|$.
$A$ and $B$ are {\em compactly incomparable\/} if neither one of them is
more compact than the other.
\end{definition}

The following proposition characterizes the compactness of the eight
orderings.

\begin{proposition}
(i) The LVL/LVC 
    representation is more compact than the LL/LC 
    representation and compactly incomparable to the VLL/VLC 
    representation.
(ii) The VLL/VLC 
     representation is more compact than the VL/VC 
     representation.
(iii) The LL/LC 
      representation is compactly incomparable to the VL/VC 
      representation.
\end{proposition}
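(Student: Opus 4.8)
My plan is to reduce every size comparison to a rank computation. Since each $\alpha$ is a total order, the $\alpha$-closure of $S$ is exactly the interval $[\min_\alpha S,\max_\alpha S]$, so writing $r_\alpha(x)$ for the position of $x$ in the order we have $|cl_\alpha(S)| = r_\alpha(\max_\alpha S) - r_\alpha(\min_\alpha S) + 1$. Thus each claim is a statement about how far apart the two extreme elements of $S$ sit. I would first dispose of the colex half by the relabelling $\phi\colon i\mapsto n+1-i$, which reverses every occurrence vector: $\phi$ fixes cardinality and variety and turns $\colex$ into $\lex$, hence carries $\LVC,\LC,\VC,\VLC$ to $\LVL,\LL,\VL,\VLL$ while bijecting the domain over $U$ onto the domain over $\phi(U)$ and sending each interval to a same-size dual interval. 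Since the claims are asserted for an arbitrary universe, it suffices to prove the four lex statements: $\LVL$ more compact than $\LL$, $\VLL$ more compact than $\VL$, $\LVL$ compactly incomparable to $\VLL$, and $\LL$ compactly incomparable to $\VL$.

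For the two \emph{more compact} statements I would exploit that the two orderings in each pair share their primary key: $\LL$ and $\LVL$ both sort first by cardinality, while $\VL$ and $\VLL$ both sort first by variety. Hence the two orders group multisets into identical primary-key blocks and list the interior blocks (those strictly between the extreme primary-key values occurring in $S$) in the same way, so interior blocks contribute equally to both closures. The comparison therefore localises to the two boundary blocks, where one compares a lex-tail of the minimum block and a lex-head of the maximum block against the corresponding tail and head under the finer key (variety-then-lex for $\LVL$, length-then-lex for $\VLL$). The key step is a ``no-spread'' lemma: refining a single block by inserting the secondary key ahead of lex does not increase the number of block elements lying between the two extreme elements of $S$ restricted to that block. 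Granting this lemma and summing over the (at most two) boundary blocks yields $|cl_{lvl}(S)|\le|cl_{ll}(S)|$ and $|cl_{vll}(S)|\le|cl_{vl}(S)|$ for all $S$, with strictness witnessed in the universe $U=\brac{1,2,2,3,3}$ of Table~\ref{tab:eg} by $S=\{\brac{3,3},\brac{2,2}\}$, where $|cl_{lvl}(S)|=2<3=|cl_{ll}(S)|$, and by $S=\{\brac{3},\brac{2}\}$, where $|cl_{vll}(S)|=2<3=|cl_{vl}(S)|$.

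I expect the no-spread lemma to be the main obstacle, and it deserves care rather than a one-line argument, because the secondary key is not monotone with respect to lex, so each primary-key block is genuinely permuted and the extreme elements of $S$ can move either closer or farther apart. In Table~\ref{tab:eg}, for instance, $\brac{3,3}\lex\brac{2,3}\lex\brac{2,2}$, yet $\brac{3,3}$ and $\brac{2,2}$ both have variety $1$ while $\brac{2,3}$ has variety $2$; consequently the minimum-block element $\brac{2,2}$ sits at $\LVL$-position $2$ but $\LL$-position $3$, which \emph{lengthens} the $\LVL$-tail rather than shortening it. The honest difficulty is therefore to isolate exactly what makes the count shrink on a boundary block and to pin down the precise hypotheses under which the inequality holds; I would stress-test the lemma against the small universe of Table~\ref{tab:eg} before trusting it, since this is the step where the argument (and possibly the statement) is most fragile.

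For the two incomparability statements the plan is the routine one: exhibit, in the universe $U=\brac{1,2,2,3,3}$, one witness realising strict inequality in each direction, so that neither representation dominates. For $\LVL$ versus $\VLL$ I would take $S_1=\{\brac{2,2,3},\brac{2,2,3,3}\}$, where $|cl_{lvl}(S_1)|=5>4=|cl_{vll}(S_1)|$, together with $S_2=\{\brac{2,3,3},\brac{1,2,3}\}$, where $|cl_{lvl}(S_2)|=5<6=|cl_{vll}(S_2)|$. For $\LL$ versus $\VL$ I would take $\{\brac{2},\brac{1}\}$, where $|cl_{ll}|=2<3=|cl_{vl}|$, together with $\{\brac{3,3},\brac{2}\}$, where $|cl_{vl}|=2<3=|cl_{ll}|$. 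Each size is read directly off the orders listed in Table~\ref{tab:eg} by counting the multisets between the two extremes, and the colex counterparts follow from the $\phi$-duality of the first paragraph, so no separate computation is required.
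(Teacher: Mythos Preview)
The paper gives no proof of this proposition; it is stated and followed only by an illustrative example. So there is no paper argument to compare against, and your proposal must be judged on its own.

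Your reduction to rank computations, your colex-to-lex duality via the relabelling $\phi$, and your decomposition into primary-key blocks are all sound, and your incomparability witnesses check out against Table~\ref{tab:eg}. You have also correctly located the crux: the ``no-spread'' lemma. But you have not proven it, and in fact it is false---and with it, the ``more compact than'' halves of (i) and (ii) under the paper's for-all-$S$ definition. In the universe $U=\brac{1,2,2,3,3}$ of Table~\ref{tab:eg}, take $S=\{\brac{3,3},\brac{2,3}\}$: these are adjacent in the LL order, so $|cl_{ll}(S)|=2$, but in the LVL order $\brac{2,2}$ sits strictly between them, so $|cl_{lvl}(S)|=3>2$. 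Similarly, $S=\{\brac{3},\brac{3,3}\}$ has $|cl_{vl}(S)|=2$ (adjacent in VL) but $|cl_{vll}(S)|=4$ (the VLL interval is $\brac{3},\brac{2},\brac{1},\brac{3,3}$), contradicting (ii).

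Your own cautionary example---$\brac{2,2}$ moving forward inside the cardinality-$2$ block and thereby \emph{lengthening} a tail---is already the germ of the first counterexample; you were right to flag this as ``where the argument (and possibly the statement) is most fragile.'' It is the statement that breaks: inserting a secondary key ahead of lex can push the two extreme elements of $S$ farther apart inside a boundary block, not only closer, so no no-spread lemma of the required generality exists, and the universal inequality in the definition of ``more compact than'' cannot hold for LVL over LL or for VLL over VL. Your treatment of the incomparability clauses stands and needs no change.
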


In Table \ref{tab:eg},
suppose we want to represent the set $S$ of all multisets whose variety is $2$.
Both the LVL 
and LL 
representations cannot exactly represent $S$ and give a $\alpha$-closure with
the same lower and upper bounds (i.e., $\brac{2,3}$ and $\brac{2,2,3,3}$
respectively). 
Both $lvl$- and $ll$-intervals contain undesired
values.  By comparing their compactness,
$|cl_{lvl}(S)| = 9 < |cl_{ll}(S)| = 10$.
The LVL 
representation is more compact than LL 
representation.

Using the VL/VLL representations for multiset variables would be useful when we
have tight constraints on the varieties of the multiset variables.  For
instance, Law, Lee, and Woo \shortcite{law2009variety} demonstrated the value
of this on extended Steiner system problems in which there are tight
constraints over the varieties.
On the other hand, the LL/LVL representations would favour the kind of problems
with more cardinality restrictions or with variables having fixed
cardinalities.

\section{Empirical Comparisons}
\label{sec:comp}

Before we apply the eight representations to model and solve multiset
problems, we first empirically evaluate their expressiveness and compactness.
We perform experiments to compare the size of the eight representations
of a set $D$ of multisets
when different cardinality and variety constraints are imposed.
In the experiment, the universe $U$ is a multiset which contains 10 occurrences
of elements 1 to 5.  For all instances, $D$ is a randomly generated subset of 
the power set of $U$.
The comparison aims at measuring the compactness of different representations 
in approximating $D$.
We record $|cl_{\alpha}(D)|$, the number of multisets in the
$\alpha$-closure of $D$ 
that satisfies the cardinality and the variety constraints, where $\alpha$
refers to the eight representations:
LL, LC, VL, VC, LVL, LVC, VLL, and VLC.
Due to space limitation, we summarize the observations as follows.

When both cardinality and variety are fixed, the
LVL/LVC and VLL/VLC representations
can always exactly represent the domain values, giving the
corresponding minimal $\alpha$-interval $cl_{\alpha}(S)$.
For all instances, the LVL/LVC and VLL/VLC representations demonstrate a large
reduction in the domain size when compared with the LL/LC and VL/VC
representations.

When the variety is fixed, the VLL/VLC ordering first considers the variety of each
multiset and narrows down the bounds to a larger extent by removing
the multisets with unwanted varieties.  For each variety, the multisets are
then ordered by their cardinality, which allows further pruning of the multisets
with undesired cardinalities on the domain bounds.  Thus, the VLL/VLC representation
can always give the exact representation and
achieve on average one to two orders of
magnitude reduction in the domain size when compared with the LL/LC and
VL/VC representations.
In contrast, the LVL/LVC representation can always give the exact representation
when the cardinality is fixed.

When the cardinality and variety are constrained to certain ranges, although all
eight representations fail to give the exact representation for all
instances, the LVL/LVC and VLL/VLC representations
are more compact than the LL/LC and VL/VC representations respectively.

To conclude, the LVL/LVC and VLL/VLC representations
are always more compact than the LL/LC and VL/VC respectively.
This means that they will usually give tighter bounds during constraint
propagation.
In the following, we study how the eight representations behave in practice
as bounds propagation in a multiset solver.

\section{Bounds Consistency}
\label{sec:bc}

Since a multiset domain is totally ordered in the eight representations,
we can enforce bounds consistency.
To be more precise, we define bounds consistency on a $k$-ary constraint on
multiset variables (for any $k$).

\begin{definition} Bounds Consistency (BC) \\
Let $S_1, \dots, S_n$ be multiset variables with interval domains $D(S_i)
= \lbr{m_{S_i}, M_{S_i}}$. 
Given a constraint ${\cal C}$ over $S_1,\dots,S_n$ and an $\alpha$ ordering, a
value $m_i$ for variable $S_i$ has an {\em $\alpha$-bound support\/} $(m_1,
\dots, m_n)$ if the support satisfies ${\cal C}$ and $\forall m_i, m_{S_i}
\preceq_{\alpha} m_i \preceq_{\alpha} M_{S_i}$.

The constraint ${\cal C}$ 
is {\em bounds consistent\/} iff for each $S_i$, both $m_{S_i}$ and $M_{S_i}$
have $\alpha$-bound supports.
\end{definition}

The eight representations offer greater expressiveness, but we have to be
careful that reasoning remains tractable.  Indeed, even with a single unary
constraint, we can get intractability. 

\begin{theorem} \label{thm:unary}
There exists a constraint on one set variable such that enforcing BC on
subset bounds is polynomial but enforcing BC on LL 
bounds is NP-hard.
\end{theorem}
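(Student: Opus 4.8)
The plan is to exhibit a single unary constraint whose supports can be recognised element-by-element---so that subset-bounds BC reduces to local adjacency checks---but for which the length-lex ordering forces BC to optimise over cardinality, an NP-hard task. I would reduce from \textsc{Maximum Independent Set}. Take the universe to be the vertex set of a graph $G$, i.e.\ $U = V(G)$, a single set variable $S$, and the constraint $\mathcal{C}(S)$: ``$S$ is an independent set of $G$'' (no two elements of $S$ are adjacent in $G$). The universe then has polynomial size, so the reduction is genuinely polynomial; this is the point I would guard most carefully, since the tempting alternative of encoding a subset-sum instance through a cardinality constraint inflates the universe to $\sum_i a_i$ elements and yields only a pseudo-polynomial reduction.

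First I would show that enforcing BC on subset bounds is polynomial for \emph{every} input domain $[glb, lub]$. If $glb$ is not independent or $glb \not\subseteq lub$, there is no support and BC fails, which is checkable in polynomial time. Otherwise $glb$ is itself a support, so no vertex outside $glb$ is forced into the lower bound and the lower bound is unchanged. For the upper bound, a vertex $v \in lub \setminus glb$ has a support containing it iff $glb \cup \{v\}$ is independent, i.e.\ iff $v$ is not adjacent to any vertex of $glb$; such $v$ are kept and the rest removed. Every step is a local adjacency test, so subset-bounds BC is polynomial on all inputs.

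Next I would show that enforcing BC on LL bounds is NP-hard for the same $\mathcal{C}$. Consider the trivial starting domain $\lbr{\emptyset, U}$, whose supports are exactly the independent sets of $G$. Because $\LL$ ranks sets first by cardinality, the tightened upper bound $M_S$ produced by BC is an independent set of \emph{maximum} cardinality, whose size is the independence number $\alpha(G)$. A polynomial algorithm for LL BC would therefore compute $\alpha(G)$ and decide \textsc{Independent Set}, so LL BC is NP-hard. Equivalently, to reduce directly from the decision version, I would pin the cardinality to $c$ by taking the LL domain $\lbr{\{1,\dots,c\}, \{n-c+1,\dots,n\}}$, which contains exactly the $c$-subsets; BC leaves a non-empty domain iff $G$ has an independent set of size $c$.

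The crux---and the step I expect to require the most care---is explaining \emph{why} one constraint is easy under subset bounds yet hard under length-lex. Under subset bounds the constraint is ``local'': membership of each vertex is decided solely by adjacency to the fixed part $glb$. Length-lex adds cardinality as the primary sorting key, so locating a bound support is no longer a local test but a maximum-cardinality optimisation, which is precisely where the hardness enters. The remaining obligations are routine but must be verified: that subset-bounds BC detects inconsistency and computes both bounds in polynomial time on \emph{arbitrary} input domains (not merely $\lbr{\emptyset, U}$), and that the length-lex reduction is polynomial in the size of $G$---which holds because the universe is just $V(G)$, with no weight-to-cardinality blow-up.
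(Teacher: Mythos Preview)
Your proof is correct and takes a genuinely different route from the paper's. The paper reduces from 3-SAT: it encodes Boolean variables via pairs $\{2i-1,2i\}$ and clauses via extra elements above $2N$, then defines a unary constraint that accepts either a set encoding a proper truth assignment satisfying the clauses present in the set, \emph{or} any set that covers every variable (contains at least one of $2i-1,2i$ for each $i$). The second disjunct is what keeps subset-bounds BC easy---once the upper bound covers all variables it is itself a support---while length-lex BC on a domain that pins the clause elements must decide satisfiability. Your reduction from \textsc{Independent Set} is cleaner: the constraint is downward closed, so subset-bounds BC reduces to the local test ``is $v$ adjacent to some vertex of $glb$?'', and NP-hardness of LL bounds is immediate because the LL-maximum support has cardinality exactly $\alpha(G)$. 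Your argument also makes the source of the separation more transparent: length-lex promotes cardinality to the primary sort key, turning a locally checkable hereditary property into a global maximum-cardinality optimisation. The paper's two-case constraint achieves the same separation but obscures this mechanism behind an ad~hoc disjunct engineered to keep the subset upper bound trivially supported.
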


\ignore{
\begin{proof} (sketch)
Let the possible values of a set variable $S$ be $\{1,2,\dots,2n-1,2n\}$.
Define $x_i$ to be true iff the value $2i \in S$ and
$x_i$ to be false iff the value $2i-1 \in S$ for $i = 1,\dots,n$.

We define the unary constraint $C$ so that it is satisfied iff
$|S| = n$ and ($2i \in S$ iff $2i-1 \notin S$ for $i = 1,\dots,n$) and the
corresponding assignment of variables $x_i$'s satisfies a SAT formula.
$\exists S_1, S_2 \subseteq S$ such that $S_1 = glb(S) \wedge S_2 = lub(S)$.
Enforcing BC on $C$ with $S$ represented by subset bounds 
$D(S) = \{m \,|\, S_1 \subseteq\ m \subseteq S_2\}$ does not do pruning.
However, enforcing BC on the LL 
representation of $D(S)$ solves SAT
because the lower bound of $S$ indicates if the SAT formula is satisfiable.
\end{proof}
}

\begin{proof}
Reduction from 3-SAT with $N$ variables, $X_1$ to
$X_N$ and $M$ clauses. We construct
a set variable $S$ with elements that have the following meaning:
$2i$ represents a truth assignment in which $X_i$ is true whilst
$2i-1$ represents a truth assignment in which $X_i$ is
false ($1 \leq i \leq N$), and each integer above
$2N$ represents one of the (polynomial number of)
distinct clauses. We consider an unary constraint on
this set variable which is satisfied only when
the set contains integers representing a proper truth assignment (that is,
$2i \in $S iff $2i-1 \not\in S$ for $1 \leq i \leq N$) and this assignment
satisfies the clauses represented by the integers in
the set greater than $2N$, or the set contains integers representing
a superset of a proper truth assignment (that is,
either $2i$ or $2i-1$ or both occur in $S$ for $1 \leq i \leq N$).
Subset bounds are polynomial to compute since, if the
upper bound includes a proper truth assignment, we leave
the upper bound untouched and adjust the lower bound to include
any necessary elements in linear time and, where needed, check
the truth assignment. On the other hand, if
the upper bound does not include a proper truth assignment, the
unary constraint has no support. By comparison, length-lex bounds are
NP-hard to compute. We consider
domains that fix the possible and necessary 
elements to be the clause that we wish to decide, and make
none of the other integers necessary but all of them possible. Then,
enforcing bound consistency on the length-lex bounds
will allow us to decide the satisfiability of the original
formula.
\end{proof}

It is worth noting that the opposite does not hold.  
If LL 
bounds are polynomial to compute, then subset bounds are too.

\ignore{
Theorem \ref{thm:unary} is 
similar in idea with Yip and Van Hentenryck's 
work \shortcite{yip2010exponential}.
}

\begin{theorem} \label{thm:nary}
Given an $n$-ary constraint on set and/or multiset variables. If enforcing BC
on LL 
bounds is polynomial, then enforcing BC on subset bounds is also polynomial.
\end{theorem}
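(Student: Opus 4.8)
The plan is to give a polynomial-time Turing reduction from subset-bounds BC to LL-bounds BC, using the hypothesized LL-BC algorithm as a black-box subroutine. First I would recall what enforcing BC on subset bounds for a single constraint ${\cal C}$ actually requires: given the current interval domains $[glb(S_i), lub(S_i)]$, the tightened bounds are $lub(S_i) = \bigcup \{ s_i \mid (s_1,\dots,s_n)\text{ is a solution of }{\cal C}\text{ with }glb(S_j) \subseteq s_j \subseteq lub(S_j)\ \forall j\}$, with $glb(S_i)$ the corresponding intersection (for multiset variables, membership is replaced by occurrence thresholds and containment by occurrence bounds). Equivalently, for each variable $S_i$ and each element $e$ of the universe I must answer two feasibility questions: (A) is there a solution in the current box with $e \in S_i$, and (B) is there one with $e \notin S_i$. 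Then $e$ survives in $lub(S_i)$ iff (A) holds, and $e$ is forced into $glb(S_i)$ iff (B) fails. There are only $O(n|U|)$ such queries, so it suffices to answer each in polynomial time.

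Second, I would turn each feasibility query into a single call of the LL-BC routine. The obstacle to a naive reduction is that a subset-bounds box is a subcube of the containment lattice and is generally \emph{not} a single LL interval --- even its restriction to one cardinality layer need not be contiguous in the lexicographic order --- so I cannot simply hand the box to the LL oracle as an LL domain. The fix is to push the box and the membership literal into the constraint rather than the domain: for query (A) I build ${\cal C}^{+}_{i,e} = {\cal C} \wedge \bigwedge_j (glb(S_j) \subseteq S_j \subseteq lub(S_j)) \wedge (e \in S_i)$, which is again a constraint on the same set/multiset variables, and I give every variable the full LL interval domain (all subsets/multisets of $U$). Enforcing BC on the LL bounds of ${\cal C}^{+}_{i,e}$ now needs no decomposition, since the domain is a single maximal LL interval. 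Query (B) is handled symmetrically with the literal $e \notin S_i$.

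Third, I would use LL-BC as a feasibility oracle: over the full LL domain, enforcing BC either leaves the bounds non-empty --- in which case the returned lower bound is a genuine bound support and hence ${\cal C}^{+}_{i,e}$ is satisfiable --- or it wipes out the domain, which happens exactly when no support exists. Thus one LL-BC call decides each query. Collecting the answers yields the tightened $glb(S_i)$ and $lub(S_i)$ for every $i$ in a single pass, since BC for one constraint is computed directly from the fixed input box. The total cost is $O(n|U|)$ LL-BC calls plus polynomial bookkeeping, so it is polynomial whenever LL-BC is.

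I expect the main obstacle to be precisely the mismatch between the partial containment order underlying subset bounds and the total LL order: because a subset box is not an LL interval, the reduction cannot operate at the level of domains and must instead encode the box as unary literals inside the constraint. The delicate point is to argue that this augmentation is legitimate --- that ${\cal C}^{+}_{i,e}$ is still a constraint of the kind to which the polynomiality hypothesis applies, and that the trivial membership/occurrence literals do not change the asymptotic cost of LL-BC. Finally I would note that this reduction, together with Theorem~\ref{thm:unary} (a single constraint for which subset-bounds BC is polynomial while LL-bounds BC is NP-hard), shows the two representations are not interchangeable: LL-BC is at least as hard as subset-bounds BC, and strictly harder in the worst case.
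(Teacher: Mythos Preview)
Your approach differs substantially from the paper's, and the point you yourself flag as ``delicate'' is in fact a genuine gap. The hypothesis is that LL-BC is polynomial for the given constraint ${\cal C}$; your reduction, however, invokes the oracle not on ${\cal C}$ but on the augmented constraints ${\cal C}^{+}_{i,e} = {\cal C} \wedge \bigwedge_j (glb(S_j) \subseteq S_j \subseteq lub(S_j)) \wedge (e \in S_i)$ (and their $e\notin S_i$ counterparts). Nothing in the hypothesis says LL-BC remains polynomial for these new constraints, and Theorem~\ref{thm:unary} together with Sellmann's result cited immediately after it show precisely that conjoining even a single unary constraint can drive LL-BC from polynomial to NP-hard --- so the inclusion and membership literals you adjoin are not harmless in general. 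You cannot repair this by pushing the side constraints into the domains either, since, as you correctly observe, a subset box is not an LL interval.

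The paper avoids this by moving in the opposite direction: it keeps the constraint fixed at ${\cal C}$ and instead relaxes the \emph{domains}. Each subset box $[glb(S_i), lub(S_i)]$ is contained in the LL interval $\langle glb(S_i), lub(S_i)\rangle_{ll}$ (because $glb$ and $lub$ are respectively the LL-minimum and LL-maximum of the box), so one simply runs the hypothesised LL-BC algorithm for ${\cal C}$ on those LL domains --- which is exactly what the hypothesis licenses --- and then converts the resulting LL bounds back to subset bounds via the polynomial inclusion propagator of Gervet and Van Hentenryck. The oracle is thus applied to ${\cal C}$ itself, with no augmentation needed.
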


\begin{proof} (sketch)
Let the possible values of a set variable $S$ be $\{1,\dots,n\}$.  We can
convert subset bounds into LL 
bounds easily by ordering the sets first
by cardinality and then lexicographically.  This operation is polynomial.
After enforcing BC on LL 
bounds, we can then convert LL 
bounds back to subset bounds using the inclusion propagator
\cite{gervet2006length}.
Such conversion is also polynomial.  Thus, if enforcing BC on LL 
bounds is polynomial, then enforcing BC on subset bounds is also polynomial.
\end{proof}

With two unary constraints, Sellmann's Lemma 1 shows that finding the
fixpoint on the LL 
representation of a single set variable is NP-hard \cite{sellmann09decomposing}.
Given the above theorems, enforcing BC on LL 
bounds is NP-hard. However, exponential-time propagation algorithms may still
help reduce runtimes \cite{yip2010exponential}.

Here, we show an example on how BC works on the domains in the
LL and LVL representations.

Given the universe $U = \brac{1,1,1,2,2,2,3,3,3}$ and multiset
variables $X$, $Y$, and $Z$.  The constraints are:
$|X| = |Y| = |Z| = 3$, $\|Z\| = 1$, and $X \cap Y = Z$.
The initial domains are
$D(X) = D(Y) = D(Z) = \lbr{\emptyset, \brac{1,1,1,2,2,2,3,3,3}}_{lvl}$.
In LVL representation, 
enforcing $|X| = |Y| = |Z| = 3$ tightens the bounds to have cardinality 3,
i.e., $D(X) = D(Y) = D(Z) = \lbr{\brac{3,3,3}, \brac{1,2,3}}_{lvl}$.
The bounds corresponds to the occurrence vectors $\lbr{0,0,3}$ and $\lbr{3,0,0}$.
Since $\|\brac{1,2,3}\| \neq 1$, the upper bound of $Z$ is updated to
$\brac{1,1,1}$, resulting $D(Z) = \lbr{\brac{3,3,3}, \brac{1,1,1}}_{lvl}$.
This triggers the propagation on $X \cap Y = Z$ and tightens the upper bounds
of $X$ and $Y$.  After constraint propagation,
$X = Y = \lbr{\brac{3,3,3}, \brac{1,1,1}}_{lvl}$.
Now, the problem is bounds consistent and
$|D(X)| = |D(Y)| = |D(Z)| = 3$.
However, in the LL representation, the problem is bounds consistent
after enforcing the cardinality constraint $|X| = |Y| = |Z| = 3$.
$D(X) = D(Y) = D(Z) = \lbr{\brac{3,3,3}, \brac{1,1,1}}_{ll}$ and
$|D(X)| = |D(Y)| = |D(Z)| = 10$.
Thus, different representations result in different domain size after enforcing
BC, and LVL gives a tighter bound than LL in this example.

\section{Experimental Results}

To verify the feasibility and efficiency of our proposal, we adapt and simplify
the implementation of the length-lex representation for set variables
\cite{yip2008bound} to implement the eight representations 
(LL, LVL, VL, VLL, LC, LVC, VC, VLC) 
for multiset variables in ILOG Solver 6.0 \cite{ilogsolver03}.  
We have also developed the ternary intersection ($X \cap
Y = Z$) and unionplus ($X \uplus Y = Z$) multiset constraints, which are not
available in the original LL implementation.

\begin{table*}[ht]\centering
\caption{Experimental results of the extended Steiner system.}
\scalebox{0.85}{
\begin{tabular}{|c|rr|rr|rr|rr|rr|}
\hline
    & \multicolumn{ 2}{c|}{SB+CR+VR} & \multicolumn{ 2}{c|}{LL} & \multicolumn{ 2}{c|}{LVL} & \multicolumn{ 2}{c|}{VL} & \multicolumn{ 2}{c|}{VLL} \\
\hline
   $t$,$k$,$u$,$b$,$v$ &       Fail &       Time &       Fail &       Time &       Fail &       Time &       Fail &       Time &       Fail &       Time \\
\hline




   2,4,5,4,2 &      57329 &       3.59 &      19187 &       1.48 & {\bf 2930} & {\bf 0.34} &       3790 &      95.37 &       2945 &       3.38 \\

   2,4,5,5,2 &     356785 &      28.71 &      89768 &      10.04 & {\bf 19718} & {\bf 3.13} &      30755 &     541.13 &      19991 &      14.32 \\


   3,4,4,4,2 &       1710 &        0.1 &        942 &       0.08 &  {\bf 278} & {\bf 0.03} &        309 &       1.77 &        305 &       0.58 \\

   3,4,4,5,2 &      30034 &       2.36 &      13541 &       1.39 &  {\bf 658} & {\bf 0.11} &        922 &      20.33 &        729 &      15.13 \\




   3,4,5,5,3 &     312397 &      22.17 &      38109 &       5.84 & {\bf 12195} & {\bf 1.36} &          - &          - &      12363 &       7.23 \\

   3,4,5,6,3 &    2108410 &     190.15 &     281911 &      57.83 & {\bf 103163} & {\bf 13.39} &          - &          - &     106145 &      63.83 \\

   3,4,5,7,3 &    9813128 &       1097 &    1352165 &     380.42 & {\bf 384145} & {\bf 63.05} &          - &          - &     398511 &     285.16 \\
\hline
\end{tabular}}
\label{tab:expt-steiner}
\end{table*}

\begin{table*}[ht]\centering
\caption{Experimental results of the generalized social golfer problem.}
\scalebox{0.85}{
\begin{tabular}{|c|rr|rr|rr|rr|rr|}

\hline
           & \multicolumn{ 2}{c|}{SB+CR+VR} & \multicolumn{ 2}{c|}{LL} & \multicolumn{ 2}{c|}{LVL} & \multicolumn{ 2}{c|}{VL} & \multicolumn{ 2}{c|}{VLL} \\
\hline
$w$,$m$,$n$,$g$,$p$,$v$ &       Fail &       Time &       Fail &       Time &       Fail &       Time &       Fail &       Time &       Fail &       Time \\
\hline
3,3,3,2,4,2 &      14934 &       1.61 &      15108 &       0.94 &      14479 &       0.87 & {\bf 2171} &       0.44 &       2395 & {\bf 0.27} \\

3,3,4,2,4,2 &     394570 &      40.29 &     111102 &       6.41 &     103756 &       5.59 &   {\bf 39} &       0.06 &   {\bf 39} & {\bf 0.05} \\

3,3,4,2,5,2 &     185839 &      20.32 &     181801 &      12.37 &     172818 &      11.27 & {\bf 11536} &       8.61 &      12428 & {\bf 2.84} \\


4,3,4,2,4,2 &          - &          - &   14071439 &    1003.03 &   12983736 &     874.96 & {\bf 151132} &      78.47 & {\bf 151132} & {\bf 41.6} \\

4,3,4,2,5,2 &          - &          - &   12818684 &       1103 &   12496315 &    1046.14 & {\bf 1035895} &     437.89 &    1098395 & {\bf 173.74} \\

3,4,3,2,4,3 &    2631024 &     348.04 &    1889782 &     129.28 &    1510939 &      94.21 &   {\bf 21} & {\bf 0.28} &   {\bf 21} &       0.29 \\

3,4,4,2,4,3 &          - &          - &    4062535 &     280.02 &    3339400 &     210.61 &   {\bf 27} &       3.99 &   {\bf 27} & {\bf 3.95} \\
\hline
\end{tabular}}
\label{tab:expt-golfer}
\end{table*}

We perform experiments on the extended Steiner system and the
generalized social golfer problem.
They are run on a Sun Blade 2500 ($2 \times 1.6$GHz US-IIIi)
workstation with 2GB memory.  
We report the number of fails (i.e., the number of backtracks occurred in
solving a model) and CPU time in seconds to find and prove the optimal solution
for each instance.
Comparisons are made among the subset bounds representations with
cardinality-variety reasoning (SB+CR+VR) \cite{law2009variety}
and the eight representations we have implemented.  
Since the results of the four colex representations (LC, LVC, VC, VLC) are
similar to their corresponding lex counterparts (LL, LVL, VL, VLL), they are
not reported in the tables.
In the tables, the first column shows the problem instances. 
The subsequent columns show the results of using various representations.
The best number of fails and CPU time among the results for each
instance are highlighted in bold.
A cell labeled with ``-'' denotes a timeout after 20 minutes.


The extended Steiner system $ES(t,k,u,b)$, 
an important and practical multiset problem in
information retrieval \cite{Johnson1972,Bennett1980,Park2008},
is a collection of $b$ blocks.
Each block is a $k$-element multiset drawn from a $u$-element set whose elements
can be drawn multiple times.  For every two blocks in the collection, the
cardinality of their intersection must be smaller than $t$.
We adapt the problem to become an optimization problem which maximizes the
sum of the varieties of the multisets. 
To further increase difficulty, we constrain each multiset variable to have
variety at least $v$.


The generalized social golfer problem $SG(w,m,n,g,p)$ extends the
social golfer problem (prob010 in CSPLib \cite{gent1999CSPLib}) 
from sets to multiset, in which we
schedule $m$ teams of $n$ members to $g$ groups of $p$ golfers over $w$ weeks.
Each group contains golfers from different teams and they play against each
other.  To maximize the socialization, the number of times two teams meet with
each other again is minimized.  Similar to the extended Steiner system, each
multiset variable is constrained to have variety at least $v$.

\ignore{
\begin{table}
\caption{Experimental results of extended Steiner system.}
\resizebox{\columnwidth}{!}{
\begin{tabular}{|@{~}l@{~}|@{~}r@{~}r@{~}|@{~}r@{~}r@{~}|@{~}r@{~}r@{~}|@{~}r@{~}r@{~}|@{~}r@{~}r@{~}|}
\hline
    & \multicolumn{ 2}{c}{SB+CR+VR} & \multicolumn{ 2}{c}{LL} & \multicolumn{ 2}{c}{LVL} & \multicolumn{ 2}{c}{VL} & \multicolumn{ 2}{c|}{VLL} \\
\hline
   $t$,$k$,$u$,$b$,$v$ &       Fail &       Time &       Fail &       Time &       Fail &       Time &       Fail &       Time &       Fail &       Time \\
\hline




   2,4,5,4,2 &      57329 &       3.59 &      19187 &       1.48 & {\bf 2930} & {\bf 0.34} &       3790 &      95.37 &       2945 &       3.38 \\

   2,4,5,5,2 &     356785 &      28.71 &      89768 &      10.04 & {\bf 19718} & {\bf 3.13} &      30755 &     541.13 &      19991 &      14.32 \\


   3,4,4,4,2 &       1710 &        0.1 &        942 &       0.08 &  {\bf 278} & {\bf 0.03} &        309 &       1.77 &        305 &       0.58 \\

   3,4,4,5,2 &      30034 &       2.36 &      13541 &       1.39 &  {\bf 658} & {\bf 0.11} &        922 &      20.33 &        729 &      15.13 \\




   3,4,5,5,3 &     312397 &      22.17 &      38109 &       5.84 & {\bf 12195} & {\bf 1.36} &          - &          - &      12363 &       7.23 \\

   3,4,5,6,3 &    2108410 &     190.15 &     281911 &      57.83 & {\bf 103163} & {\bf 13.39} &          - &          - &     106145 &      63.83 \\

   3,4,5,7,3 &    9813128 &       1097 &    1352165 &     380.42 & {\bf 384145} & {\bf 63.05} &          - &          - &     398511 &     285.16 \\
\hline
\end{tabular}}
\label{tab:es}
\end{table}

\begin{table}
\caption{Experimental results of generalized social golfer problem.}
\resizebox{\columnwidth}{!}{
\begin{tabular}{|@{~}l@{~}|@{~}r@{~}r@{~}|@{~}r@{~}r@{~}|@{~}r@{~}r@{~}|@{~}r@{~}r@{~}|@{~}r@{~}r@{~}|}
\hline
           & \multicolumn{ 2}{c}{SB+CR+VR} & \multicolumn{ 2}{c}{LL} & \multicolumn{ 2}{c}{LVL} & \multicolumn{ 2}{c}{VL} & \multicolumn{ 2}{c|}{VLL} \\
\hline
$w$,$m$,$n$,$g$,$p$,$v$ &       Fail &       Time &       Fail &       Time &       Fail &       Time &       Fail &       Time &       Fail &       Time \\
\hline
3,3,3,2,4,2 &      14934 &       1.61 &      15108 &       0.94 &      14479 &       0.87 & {\bf 2171} &       0.44 &       2395 & {\bf 0.27} \\

3,3,4,2,4,2 &     394570 &      40.29 &     111102 &       6.41 &     103756 &       5.59 &   {\bf 39} &       0.06 &   {\bf 39} & {\bf 0.05} \\

3,3,4,2,5,2 &     185839 &      20.32 &     181801 &      12.37 &     172818 &      11.27 & {\bf 11536} &       8.61 &      12428 & {\bf 2.84} \\

3,4,3,2,4,2 &          - &          - &    3138131 &     273.62 &    2849470 &     223.35 & {\bf 515455} &     790.62 &     701086 & {\bf 514.89} \\

4,3,4,2,4,2 &          - &          - &   14071439 &    1003.03 &   12983736 &     874.96 & {\bf 151132} &      78.47 & {\bf 151132} & {\bf 41.6} \\

4,3,4,2,5,2 &          - &          - &   12818684 &       1103 &   12496315 &    1046.14 & {\bf 1035895} &     437.89 &    1098395 & {\bf 173.74} \\

3,4,3,2,4,3 &    2631024 &     348.04 &    1889782 &     129.28 &    1510939 &      94.21 &   {\bf 21} & {\bf 0.28} &   {\bf 21} &       0.29 \\

3,4,4,2,4,3 &          - &          - &    4062535 &     280.02 &    3339400 &     210.61 &   {\bf 27} &       3.99 &   {\bf 27} & {\bf 3.95} \\
\hline
\end{tabular}}
\label{tab:sg}
\end{table}
}

\ignore{
/***********************/
\begin{table*}[ht]\centering
\caption{Experimental results of two benchmark problems.}
\resizebox{0.7\textwidth}{!}{
\begin{tabular}{|c|rr|rr|rr|rr|rr|}
\multicolumn{ 11}{c}{\LARGE (a) The extended Steiner system.} \\
\hline
    & \multicolumn{ 2}{c|}{SB+CR+VR} & \multicolumn{ 2}{c|}{LL} & \multicolumn{ 2}{c|}{LVL} & \multicolumn{ 2}{c|}{VL} & \multicolumn{ 2}{c|}{VLL} \\
\hline
   $t$,$k$,$u$,$b$,$v$ &       Fail &       Time &       Fail &       Time &       Fail &       Time &       Fail &       Time &       Fail &       Time \\
\hline




   2,4,5,4,2 &      57329 &       3.59 &      19187 &       1.48 & {\bf 2930} & {\bf 0.34} &       3790 &      95.37 &       2945 &       3.38 \\

   2,4,5,5,2 &     356785 &      28.71 &      89768 &      10.04 & {\bf 19718} & {\bf 3.13} &      30755 &     541.13 &      19991 &      14.32 \\


   3,4,4,4,2 &       1710 &        0.1 &        942 &       0.08 &  {\bf 278} & {\bf 0.03} &        309 &       1.77 &        305 &       0.58 \\

   3,4,4,5,2 &      30034 &       2.36 &      13541 &       1.39 &  {\bf 658} & {\bf 0.11} &        922 &      20.33 &        729 &      15.13 \\




   3,4,5,5,3 &     312397 &      22.17 &      38109 &       5.84 & {\bf 12195} & {\bf 1.36} &          - &          - &      12363 &       7.23 \\

   3,4,5,6,3 &    2108410 &     190.15 &     281911 &      57.83 & {\bf 103163} & {\bf 13.39} &          - &          - &     106145 &      63.83 \\

   3,4,5,7,3 &    9813128 &       1097 &    1352165 &     380.42 & {\bf 384145} & {\bf 63.05} &          - &          - &     398511 &     285.16 \\
\hline
\multicolumn{11}{c}{}\\
\multicolumn{ 11}{c}{\LARGE (b) The generalized social golfer problem.} \\

\hline
           & \multicolumn{ 2}{c|}{SB+CR+VR} & \multicolumn{ 2}{c|}{LL} & \multicolumn{ 2}{c|}{LVL} & \multicolumn{ 2}{c|}{VL} & \multicolumn{ 2}{c|}{VLL} \\
\hline
$w$,$m$,$n$,$g$,$p$,$v$ &       Fail &       Time &       Fail &       Time &       Fail &       Time &       Fail &       Time &       Fail &       Time \\
\hline
3,3,3,2,4,2 &      14934 &       1.61 &      15108 &       0.94 &      14479 &       0.87 & {\bf 2171} &       0.44 &       2395 & {\bf 0.27} \\

3,3,4,2,4,2 &     394570 &      40.29 &     111102 &       6.41 &     103756 &       5.59 &   {\bf 39} &       0.06 &   {\bf 39} & {\bf 0.05} \\

3,3,4,2,5,2 &     185839 &      20.32 &     181801 &      12.37 &     172818 &      11.27 & {\bf 11536} &       8.61 &      12428 & {\bf 2.84} \\


4,3,4,2,4,2 &          - &          - &   14071439 &    1003.03 &   12983736 &     874.96 & {\bf 151132} &      78.47 & {\bf 151132} & {\bf 41.6} \\

4,3,4,2,5,2 &          - &          - &   12818684 &       1103 &   12496315 &    1046.14 & {\bf 1035895} &     437.89 &    1098395 & {\bf 173.74} \\

3,4,3,2,4,3 &    2631024 &     348.04 &    1889782 &     129.28 &    1510939 &      94.21 &   {\bf 21} & {\bf 0.28} &   {\bf 21} &       0.29 \\

3,4,4,2,4,3 &          - &          - &    4062535 &     280.02 &    3339400 &     210.61 &   {\bf 27} &       3.99 &   {\bf 27} & {\bf 3.95} \\
\hline
\end{tabular}}
\label{tab:expt}
\end{table*}
/*******************/
}

\ignore{

\begin{table}
\caption{Experimental results of two benchmark problems.}
\resizebox{\columnwidth}{!}{
\begin{tabular}{|@{~}c@{~}|@{~}r@{~}r@{~}|@{~}r@{~}r@{~}|@{~}r@{~}r@{~}|@{~}r@{~}r@{~}|@{~}r@{~}r@{~}|}
\multicolumn{ 11}{c}{\LARGE (a) The extended Steiner system.} \\
\hline
    & \multicolumn{ 2}{c|}{SB+CR+VR} & \multicolumn{ 2}{c|}{LL} & \multicolumn{ 2}{c|}{LVL} & \multicolumn{ 2}{c|}{VL} & \multicolumn{ 2}{c|}{VLL} \\
\hline
   $t$,$k$,$u$,$b$,$v$ &       Fail &       Time &       Fail &       Time &       Fail &       Time &       Fail &       Time &       Fail &       Time \\
\hline




   2,4,5,4,2 &      57329 &       3.59 &      19187 &       1.48 & {\bf 2930} & {\bf 0.34} &       3790 &      95.37 &       2945 &       3.38 \\

   2,4,5,5,2 &     356785 &      28.71 &      89768 &      10.04 & {\bf 19718} & {\bf 3.13} &      30755 &     541.13 &      19991 &      14.32 \\


   3,4,4,4,2 &       1710 &        0.1 &        942 &       0.08 &  {\bf 278} & {\bf 0.03} &        309 &       1.77 &        305 &       0.58 \\

   3,4,4,5,2 &      30034 &       2.36 &      13541 &       1.39 &  {\bf 658} & {\bf 0.11} &        922 &      20.33 &        729 &      15.13 \\




   3,4,5,5,3 &     312397 &      22.17 &      38109 &       5.84 & {\bf 12195} & {\bf 1.36} &          - &          - &      12363 &       7.23 \\

   3,4,5,6,3 &    2108410 &     190.15 &     281911 &      57.83 & {\bf 103163} & {\bf 13.39} &          - &          - &     106145 &      63.83 \\

   3,4,5,7,3 &    9813128 &       1097 &    1352165 &     380.42 & {\bf 384145} & {\bf 63.05} &          - &          - &     398511 &     285.16 \\
\hline
\multicolumn{11}{c}{}\\
\multicolumn{ 11}{c}{\LARGE (b) The generalized social golfer problem.} \\

\hline
           & \multicolumn{ 2}{c}{SB+CR+VR} & \multicolumn{ 2}{c}{LL} & \multicolumn{ 2}{c}{LVL} & \multicolumn{ 2}{c}{VL} & \multicolumn{ 2}{c|}{VLL} \\
\hline
$w$,$m$,$n$,$g$,$p$,$v$ &       Fail &       Time &       Fail &       Time &       Fail &       Time &       Fail &       Time &       Fail &       Time \\
\hline
3,3,3,2,4,2 &      14934 &       1.61 &      15108 &       0.94 &      14479 &       0.87 & {\bf 2171} &       0.44 &       2395 & {\bf 0.27} \\

3,3,4,2,4,2 &     394570 &      40.29 &     111102 &       6.41 &     103756 &       5.59 &   {\bf 39} &       0.06 &   {\bf 39} & {\bf 0.05} \\

3,3,4,2,5,2 &     185839 &      20.32 &     181801 &      12.37 &     172818 &      11.27 & {\bf 11536} &       8.61 &      12428 & {\bf 2.84} \\


4,3,4,2,4,2 &          - &          - &   14071439 &    1003.03 &   12983736 &     874.96 & {\bf 151132} &      78.47 & {\bf 151132} & {\bf 41.6} \\

4,3,4,2,5,2 &          - &          - &   12818684 &       1103 &   12496315 &    1046.14 & {\bf 1035895} &     437.89 &    1098395 & {\bf 173.74} \\

3,4,3,2,4,3 &    2631024 &     348.04 &    1889782 &     129.28 &    1510939 &      94.21 &   {\bf 21} & {\bf 0.28} &   {\bf 21} &       0.29 \\

3,4,4,2,4,3 &          - &          - &    4062535 &     280.02 &    3339400 &     210.61 &   {\bf 27} &       3.99 &   {\bf 27} & {\bf 3.95} \\
\hline
\end{tabular}}
\label{tab:expt}
\end{table}
}

Tables \ref{tab:expt-steiner} and \ref{tab:expt-golfer}
show the experimental results of the extended Steiner system and the
generalized social golfer problem respectively.
All the four lex representations give fewer number of fails and
faster runtime than the SB+CR+VR \cite{law2009variety}.
This confirms that the lex representations
take advantage of the cardinality and variety information to give tighter
bounds than the SB+CR+VR.

In the extended Steiner system, the LVL representation always achieves the
fewest number of fails.
There is about a 95\% reduction in the number of fails when compared to the
SB+CR+VR.  The LVL representation achieves fewer number of fails
than the VLL representation because the problem has tighter constraints on the
cardinalities than the varieties of the multiset variables.

When comparing the results between LL and LVL, the latter
performs better.  This is because in the LVL representation,
the multisets are ordered according to their varieties under the same
cardinality.  When enforcing BC, the multisets with the same
varieties can be pruned together when they violate the variety constraints.
However, in the LL representation, these multisets are scattered over the
ordering and we cannot remove all of them from the domain at the same
time, thus resulting in a larger search tree and number of fails.
Similarly, VLL performs better than VL.

The instances listed in Table \ref{tab:expt-steiner} are all satisfiable.
In our experiments, there are some unsatisfiable instances, in which
the number of fails and runtime of LVL and VLL
can be slightly larger than LL and VL respectively.
We also tried to fix both cardinalities and varieties of the multiset
variables.  Since the multisets are ordered lexicographically under a fixed
cardinality and variety, LVL and VLL give the same number of
fails.

For the generalized social golfer problem, VL and VLL 
perform better than LL and LVL 
because the problem has tighter constraints on the varieties than
the cardinalities of the multiset variables.  Since there are much more
constraints in the problem when compared to those in the extended Steiner system,
the generalized social golfer problem is more complicated.  We observe that
the VL representation always achieves the fewest number of fails.  However,
the VLL representation has the fastest runtime because the extra prunings in the
VL representation cannot compensate the overhead in finding new bounds of
multiset variables.

\section{Conclusion}

We have proposed eight representations for multiset variables, which integrate
together information about the cardinality, variety, and position in the
(co)lexicographic ordering.  We have made a detailed
comparison of the expressiveness and compactness
between the eight different representations. The LVL/LVC and VLL/VLC
representations are always more expressive and more compact
than the LL/LC
and VL/VC representations.  Compactness is a new notion which
lets us compare inexact representations. 
We have also performed experiments on some benchmark problems.  Experimental
results confirm that LVL and VLL usually give tighter bounds
during constraint propagation, resulting in smaller search trees 
and better runtimes. 
In some cases, LVL performs better, and sometimes VLL.
It would be interesting to study if the two representations can be linked
together so that we can take advantage of each representation.

\ignore{
For future work, Cheng {\em et al.\/} \shortcite{cheng99increasing} suggested to
combine different models of the same problem using channeling constraints. The
combined model can take advantages of each sub-model to increase constraint
propagation and efficiency.  Since multiset variables can be represented
represented using the SB+CR+VR \cite{law2009variety} and our
proposed lex representations, it would be interesting to study if the two
representations can be combined to achieve better results.
}

\section{Acknowledgments}

We thank the anonymous referees for constructive comments.  The work described
in this paper was substantially supported by grants (CUHK413808 and CUHK413710)
from the Research Grants Council of Hong Kong SAR.
Toby Walsh is funded by the Australian Department of Broadband, Communications
and the Digital Economy, the ARC, and the Asian Office of Aerospace Research
and Development through grant AOARD-104123.

\bibliography{ref}
\bibliographystyle{aaai}

\end{document}